\documentclass[journal,11pt]{IEEEtran}
\ifCLASSINFOpdf
\else
\fi
\hyphenation{op-tical net-works semi-conduc-tor}

\usepackage{comment} 

\usepackage{amsmath,graphicx}

\usepackage{pstricks,graphicx}
\usepackage{setspace}
\usepackage{psfrag}
\usepackage{amssymb}
\usepackage{amsmath}
\usepackage{amsthm}

\newtheorem{remark}{Remark}

\newtheorem{lemma}{Lemma}

\newtheorem{theorem}{Theorem}
\newtheorem{proposition}{Proposition}

\newtheorem{definition}{Definition}

\usepackage{microtype}
\usepackage{graphicx}
\usepackage{subfigure}
\usepackage{booktabs} 

 \usepackage{algorithm}
 \usepackage{algorithmic}

\usepackage{amsmath,graphicx}

\usepackage{pstricks,graphicx}
\usepackage{setspace}
\usepackage{psfrag}
\usepackage{amssymb}
\usepackage{amsmath}
\usepackage{amsthm}


\usepackage{hyperref}


\begin{document}



\title{Rapidly Adapting Moment Estimation}

\graphicspath{{RAME_figures/}}
%

\author{Guoqiang~Zhang,  Kenta Niwa and W. B. Kleijn
\thanks{G.~Zhang is with the School of Electrical and Data Engineering, University of Technology, Sydney, Australia. Email: {guoqiang.zhang@uts.edu.au}}
\thanks{K.~Niwa is with Nippon Telegraph and Telephone (NTT) Corporation, Japan.
Email: {niwa.kenta@lab.ntt.co.jp}}
\thanks{W. B. Kleijn is with the School of Engineering and Computer Science, Victoria University of Wellington, New Zealand. Email: {bastiaan.kleijn@ecs.vuw.ac.nz}}
}

\maketitle
\begin{abstract}

Adaptive gradient methods such as Adam have been shown to be very effective for training deep neural networks (DNNs) by tracking the second moment of gradients to compute the individual learning rates. Differently from existing methods, we make use of the most recent first moment of gradients to compute the individual learning rates per iteration. The motivation behind it is that the dynamic variation of the first moment of gradients may  provide useful information to obtain the learning rates.  We refer to the new method as the \emph{rapidly adapting moment estimation (RAME)}. 
The theoretical convergence of deterministic RAME is studied by using an analysis similar to the one used in \cite{Adam18Converge} for Adam.  Experimental results for training a number of DNNs show promising performance of RAME w.r.t. the convergence speed and generalization performance compared to the stochastic heavy-ball (SHB) method,  Adam, and RMSprop. 

\end{abstract}


\begin{IEEEkeywords}
Adaptive gradient, stochastic heavy-ball method, Adam, RMSprop.
\end{IEEEkeywords}

%
\IEEEpeerreviewmaketitle

\vspace{0mm}
\section{Introduction}
\vspace{0mm}
Stochastic gradient descent (SGD) and its variants have been widely applied in deep learning due to their simplicity and effectiveness \cite{Lecun15nature}.  Vanilla SGD (i.e., without making use of the gradient trajectory) often works reasonably well given enough time if the learning rate is set properly in a dynamical manner over the training iterations. Generally speaking, the historical gradients of SGD carry information about the local problem structure, such as curvature and individual noise levels of current gradient coordinates. Therefore, it is natural to exploit historical gradients to assist the current parameter update for fast convergence. 

In the literature, significant progress has been achieved on making use of historical gradients to accelerate vanilla SGD. Suppose the objective function $f(\boldsymbol{x})$ is differentiable. In 1964, Polyak proposed the so-called heavy-ball (HB) method for minimizing the objective function \cite{Polyak64CM}, which is given by
\begin{align}
\boldsymbol{m}_{t} & = \beta_t \boldsymbol{m}_{t-1} + \alpha_t  \nabla f(\boldsymbol{x}_{t-1}) \label{equ:SGDM1}\\
\boldsymbol{x}_{t} &= \boldsymbol{x}_{t-1} - \eta_t \boldsymbol{m}_{t},  \label{equ:SGDM2}
\end{align}
where $\nabla f(\boldsymbol{x}_{t-1})$ is the gradient at $\boldsymbol{x}_{t-1}$, and $\alpha_t$ (or $\eta_t$)\footnote{The Keras platform treats $\alpha_t$ as the learning rate and set $\eta_t=1$ while Pytorch takes $\eta_t$ as the learning rate and set $\alpha_t=1$. }  is the common learning rate for all the coordinates of $\boldsymbol{x}_t$.  Later, in 1983, Nesterov proposed a method to further accelerate HB by making  use of the first moments in a smart way \cite{Nesterov83NAG, Sutskever13NAG, Yang2016HB}, which is known as Nesterov's accelerated gradient (NAG). Considering HB, we note from (\ref{equ:SGDM2}) that $\boldsymbol{x}_{t}$ is updated as a  linear function of the first moment $\boldsymbol{m}_{t}$. To our best knowledge,  \emph{there is no prior work on designing a nonlinear function of the first moment $\boldsymbol{m}_t$ for a more effective parameter update.} In this work, we will attempt to do so, where the nonlinearity of  $\boldsymbol{m}_t$ will be interpreted as a form of individual learning rates as opposed to the common learning rate  $\alpha_t$ (or $\eta_t$).  

In the last decade, research on computing proper individual learning rates for $\boldsymbol{x}_t$ in SGD has made considerable progress. Duchi et. al \cite{Duchi11AdaGrad}, in 2011, were first to propose the tracking of the second moment of the gradients. The resulting method, Adagrad, computes the gradient based on the tracked information. It is found that AdaGrad converges fast when the gradients are sparse. Following the work of \cite{Duchi11AdaGrad}, various adaptive gradient methods have been proposed for computing more effective individual learning rates. The methods include, for example, RMSprop \cite{Tieleman12RMSProp}, Adam \cite{Kingma17},  NAdam \cite{Dozat16NAdam}, AMSGrad \cite{Reddi18Amsgrad}, and PAdam \cite{Chen18Padam}.  
We note that all the above methods need to track a certain form of the second moment of gradients.

While deep learning has seen rapid advances in algorithmic development, theoretical convergence analysis has also made remarkable progress recently.  The work of \cite{Reddi18Amsgrad} showed that Adam does not converge for a special class of convex optimization problems.  The authors of \cite{Adam18Converge} studied the convergence of Adam and RMSprop for smooth nonconvex optimization.  \cite{Zhou18con_PAdam} and  \cite{Chen18con_Adam} also considered smooth nonconvex optimization. In particular,  \cite{Zhou18con_PAdam}  analyzed the convergence of PAdam while \cite{Chen18con_Adam} considered AMSGrad and a variant of AdaGrad. From a high level point of view, analysis of nonconvex optimization is highly valuable in practice as training a deep neural network (DNN) is well known to be a nonconvex optimization problem.  

In this work, we propose a new adaptive gradient method based on a novel design principle.  In the new method, the individual learning rates are computed by using only the most recent first moment. By doing so, the method is able to react to the dynamic variation of the first moment rapidly, which is why it is referred to as \emph{rapidly adapting moment estimation (RAME)}. Our motivation for the new algorithm development is based on the hypothesis that the first moment may already carry useful information 
to allow for the learning-rate computation.  If the first moment is available, it may not be needed to compute the second moment, thus saving a memory space of the DNN model size. 

As is summarized in Alg.1, RAME is designed by using a nonlinear function $\boldsymbol{m}_t/(|\boldsymbol{m}_t|^q+\xi)$ of the first moment $\boldsymbol{m}_t$ for the parameter update. The nonlinear function makes the heavy-ball (HB) method less heavy.  
 With the expression $1/(|\boldsymbol{m}_t|^q+\xi) $,  the moment coordinates of $\boldsymbol{m}_t$ with large magnitudes receive small learning rates while those with small magnitudes are equipped with relatively large learning rates.   
To better understand the impact of the expression $1/|(\boldsymbol{m}_t|^q+\xi) $, we reformulate and interpret its update expressions from a dynamic system perspective. Its convergence is studied by using an analysis that is similar to that in  \cite{Adam18Converge}  for deterministic Adam.

We evaluate RAME together with stochastic HB (SHB), Adam, and RMSprop for both classification and regression problems in deep learning.  Specifically,  four  classification tasks are investigated, which are training VGG16 \cite{Simonyan16DCNN} for CIFAR10 and CIFAR100, training ResNet20 \cite{He15ResNet} for CIFAR10, and training a  multiple layer perceptron (MLP) network for CIFAR10.  As for regression, we conduct people semantic segmentation using ResNet152 as the backend \cite{He15ResNet, Lin2016Pyramid}. The Microsoft COCO database is employed to train the neural network. The convergence results obtained from the above tasks show that RAME produces either better or equivalent validation performance compared to the other three methods.   


The remainder of the paper is organized as follows:  Section~\ref{sec:problem} introduces notations and defines the optimization problem.  Section~\ref{sec:RAME} is devoted to the new method RAME. In Section~\ref{sec:reformulation}, we provide a new interpretation of RAME from a dynamic system viewpoint. Section~\ref{sec:analysis} presents the algorithmic convergence analysis. After that,  experimental results are then described in Section \ref{sec:experiment}, followed by conclusions in  Section~\ref{sec:conclusion}. 

\vspace{0mm}
\section{Notations and Problem Definition}
\vspace{-0mm}
\label{sec:problem}

\begin{algorithm}[t]
   \caption{RAME for a deterministic function $f(\boldsymbol{x})$}
   \label{alg:v2}
\begin{algorithmic}[1]
   \STATE {\bfseries Input:} $\beta_t$, $\eta_t$,  $\alpha_t$, $1> q \geq 0$, $\xi>0$
   \STATE {\bfseries  Init.:} $\boldsymbol{x}_0\in \mathbb{R}^d$,  $\boldsymbol{m}_0 = 0$
   \FOR{$t=1, 2, \ldots, T$}
   \STATE $\boldsymbol{g}_t \leftarrow \nabla f(\boldsymbol{x}_{t-1}) $
   \STATE $\boldsymbol{m}_t \leftarrow \beta_t \boldsymbol{m}_{t-1}  + \alpha_t  \boldsymbol{g}_t$ 
   \STATE $\boldsymbol{x}_t \leftarrow \boldsymbol{x}_{t-1} - \eta_t \frac{\boldsymbol{m}_t}{|\boldsymbol{m}_t|^q + \xi} $
   \ENDFOR 
   \STATE {\bfseries Output:} $\boldsymbol{x}_T$  
   \\ \hrulefill \\
  \hspace{-5mm} *\hspace{0.1mm} \textbf{Experimental setup}:  $\alpha_t$: learning rate  \newline
  $\textrm{ }\quad\;\;\;\;\;$  $(\beta_t, \eta_t, \xi) =(0.9, 1, 0)$, $q=0.125\textrm{ and }  0.25$
\end{algorithmic}
\end{algorithm}

We firstly introduce notations for mathematical description in the remainder of the paper.  We use bold small letters to denote vectors and bold capital letters to denote matrices. Given a vector $\boldsymbol{x}\in \mathbb{R}^d$, we denote its $l_1$, $l_2$ and $l_{\infty}$ norm as $\|\boldsymbol{x}\|_1 = \sum_{i=1}^d |x_i| $,   $\|\boldsymbol{x}\|_2 = \sqrt{\sum_{i=1}^d x_i^2}$ and $\|\boldsymbol{x}\|_{\infty} = \max_{i=1}^d |x_i |$, respectively. We write the vector obtained by computing the absolute value per coordinate of $\boldsymbol{x}$ as $|\boldsymbol{x}|$. The operation $\textrm{diag}(\boldsymbol{x})$ denotes a diagonal matrix with $\boldsymbol{x}$ on its diagonal. Given two vectors $\boldsymbol{x}, \boldsymbol{y}\in \mathbb{R}^d$, $\boldsymbol{x} \odot \boldsymbol{y}$ and $\boldsymbol{x}/\boldsymbol{y}$ represent element-wise vector multiplication and division, respectively. The operation $\langle \boldsymbol{x} , \boldsymbol{y} \rangle $ denotes the inner product of the two vectors. For a matrix $\boldsymbol{M}\in \mathbb{R}^{d\times d}$, we use $\lambda_{\max}(\boldsymbol{M})$ and $\lambda_{\min}(\boldsymbol{M})$ to denote the largest and smallest singular values of $\boldsymbol{M}$, respectively. 

We attempt to solve the following minimization problem of a finite functional sum 
\begin{align}
\boldsymbol{x}^{*}  = \arg\min_{\boldsymbol{x}\in \mathbb{R}^d} f(\boldsymbol{x}) = \arg\min_{\boldsymbol{x}\in \mathbb{R}^d} \sum_{i=1}^k  f_i(\boldsymbol{x}),
\label{equ:problem}
\end{align}
where the $k$ functions $\{f_i\}_{i=1}^k$ are assumed to be continuously differentiable. In practice, the vector $\boldsymbol{x}$ can be taken as representing the weights of a DNN.  Each function $f_i$ in (\ref{equ:problem}) can be considered to be constructed from a minibatch of training samples. In total, the $k$ functions cover all the training samples.  At each iteration during the optimization procedure, one can either randomly select a function for computation or follow a predefined order from $\{f_i\}_{i=1}^k$.  The above minibatch-based scheme makes it possible to minimize the overall function $f(\boldsymbol{x})$ under the condition of an extremely large number of training samples and limited computational resources in practice.


\vspace{0mm}
\section{Rapidly Adapting Moment Estimation}
\vspace{-0mm}
\label{sec:RAME}


\subsection{On effectiveness of HB}
\vspace{0mm}
\label{subsec:HB}

In this subsection, we first briefly present the empirical results collected in \cite{Jastrzebski19SGD} by analyzing vanilla SGD. We then study the effectiveness of HB by drawing connections between its update expressions and the observations made in \cite{Jastrzebski19SGD}. 
 
The recent work \cite{Jastrzebski19SGD} investigates the performance of vanilla SGD by testing various setups of the learning rates along different curvature directions.  At every iteration, the Hessian matrix is computed in addition to the gradient vector. The sharp curvature directions are then identified as the eigenvectors of the Hessian matrix with large eigenvalues. The model parameters are updated by first projecting the gradient vector along the eigenvectors and then setting individual learning rates along the projections. It is found that faster convergence and better generalization performance can be achieved by setting smaller learning rates for the sharp curvature directions than for the flat directions. That is, it is preferable to suppress the impact of the contributions from the sharp curvature directions and enhance the  impact from the remaining directions.  The above observations are reasonable as sharp curvature directions would lead to high probabilities of missing the local minimums if their learning rates are not set small.  

In practice, it is rather expensive to compute the Hessian matrix. The HB method captures information of the functional curvature by tracking the first moment $\boldsymbol{m_t}$ over iterations. Since $\boldsymbol{m_t}$ is computed as a weighted average of the past gradients, it is natural that the gradient elements having roughly the same directions across iterations, which correspond to flat curvature directions, would be enhanced. In contrast, the gradient elements with varying directions across iterations due to sharp curvatures would be suppressed in the computation of $\boldsymbol{m_t}$.  As a result, when performing the parameter update, HB implicitly sets smaller learning rates for the sharp curvature directions than for the flat curvature directions as suggested by the recent work \cite{Jastrzebski19SGD}. 

 We note that the effectiveness of HB can be pushed to a higher level in different ways. It is known that the NAG method accelerates HB by constructing a different linear function of the first moment and gradient in the parameter-update. On the other hand, existing adaptive gradient methods such as Adam modify HB by introducing individual learning rates in addition to the common learning rate $\alpha_t$ or $\eta_t$ in (\ref{equ:SGDM1})-(\ref{equ:SGDM2}).  By doing so, these methods receive more algorithmic flexibility than HB, leading to a more effective parameter-update.  In this work, we intend to construct and apply a nonlinear function of the first moment in the parameter-update of HB, as will be discussed later on.

\subsection{Revisiting Adam}

Currently, Adam \cite{Kingma17} is probably the most popular adaptive gradient method in the deep learning community, of which the update expressions can be written as
\begin{align}
\boldsymbol{m}_{t} & =\beta_1\boldsymbol{m}_{t-1} + (1-\beta_1) \nabla f_{t_i}(\boldsymbol{x}_{t-1}) \label{equ:Adam1}\\
\boldsymbol{v}_{t} & = \beta_2 \boldsymbol{v}_{t-1} + (1-\beta_2) |\nabla f_{t_i}(\boldsymbol{x}_{t-1})|^2 \label{equ:Adam2} \\
\boldsymbol{x}_{t} &= \boldsymbol{x}_{t-1} - \alpha_t \frac{\boldsymbol{m}_{t}}{\sqrt{\boldsymbol{v}_{t}}+\xi},  \label{equ:Adam3}
\end{align}
where $0<\beta_1, \beta_2<1$, and $f_{t_i}$ represents the function being selected from the $k$ functions in (\ref{equ:problem}) at iteration $t$. The parameter $\xi > 0$ in  (\ref{equ:Adam3}) is introduced to avoid division by zero. 
The parameter $\alpha_t $ is the common learning rate while $1/(\sqrt{\boldsymbol{v}_{t}}+\xi)$ represents the individual learning rates.

Equ.~(\ref{equ:Adam2}) indicates that the second moment $\boldsymbol{v}_{t}$ is obtained from the moving average of squared gradients.  That is, only the magnitude information of gradients is reflected in the second moment.  With the computation of $1/(\sqrt{\boldsymbol{v}_{t}}+\xi)$, the gradient elements with large magnitudes across iterations would lead to small learning rates. On the other hand, those with small magnitudes would receive large learning rates and tend to be aggressive when updating their corresponding coordinates of $\boldsymbol{x}$. This allows Adam to adjust the individual learning rates in a  self-adaptive manner.   

Finally, it is clear that the first and second moments of Adam carry different dynamic variations of gradients over iterations.  The first moment takes the sign of gradients into consideration which is missing in the second moment. One natural research question is if the first moment itself can be used for learning-rate computation. Usage of the second moment might not be the only approach to compute the individual learning rates.

\begin{remark}
The method RMSprop \cite{Tieleman12RMSProp} can be taken as a special case of Adam by letting $\beta_1 =0$ in (\ref{equ:Adam1}).  That is, only the second moment is computed for the learning-rate computation.  
\end{remark} 
 


 
\subsection{Algorithm design}
\label{subsec:alg}

Differently from the design strategies of existing adaptive gradient methods, we attempt to make the HB method less aggressive by introducing a nonlinear function of the first moment in the parameter-update.  In particular,  we design the update expressions of the new method RAME to be  
\begin{align}
\boldsymbol{m}_t &= \beta_t \boldsymbol{m}_{t-1}  + \alpha_t \nabla f_{t_i}(\boldsymbol{x}_{t-1}) \label{equ:RAME1} \\
\boldsymbol{x}_t &=  \boldsymbol{x}_{t-1} - \eta_t \boldsymbol{h}(\boldsymbol{m}_t),  \label{equ:RAME2}  
\end{align}
where $(\alpha_t,\eta_t)$ are inherited from (\ref{equ:SGDM1})-(\ref{equ:SGDM2}), and $\boldsymbol{h}(\boldsymbol{m}_t)$ is a $d$-dimensional nonlinear function of $\boldsymbol{m}_t$, given by
\begin{align}
 \boldsymbol{h}(\boldsymbol{m}_t) =  \frac{\boldsymbol{m}_t}{|\boldsymbol{m}_t|^q + \xi},  \label{equ:RAME2_} 
\end{align}
where $\xi \geq 0$ and $1> q\geq 0$.  The upper bound $1> q$ is imposed due to the fact when $q=1$ and  $\xi=0$,  the magnitude of $\boldsymbol{m}_t$ will be cancelled in computing $\boldsymbol{x}_t$, which is undesirable.  The update expressions (\ref{equ:RAME1})-(\ref{equ:RAME2_}) are for minibatch-based DNN training. At each iteration, one individual  function is selected from the total $k$ functions for the parameter update. When the overall $f(\boldsymbol{x})$ is considered per iteration, RAME becomes deterministic, which is summarized in Alg. 1. 

The nonlinear function $\boldsymbol{h}(\boldsymbol{m}_t)$ ensures that the components of $\boldsymbol{m}_t$ with large magnitudes receive smaller learning rates, thus making RAME less aggressive than HB. The motivation behind this modification is that the parameters $\{\beta_t\}$ of HB are usually set to be close to 1 while the $\{\alpha_t\}$ form a decreasing sequence in practice (see \cite{Krizhevsky} for an example). In this situation, the individual learning rates $1/( | \boldsymbol{m}_t|^q+\xi)$ make it easier for $\boldsymbol{m}_t$ to capture the local functional structure around $\boldsymbol{x}_{t-1}$.

Conceptually speaking, RAME utilises the dynamics of gradient information  to compute the individual learning rates while Adam employs the dynamics of gradient-magnitude information. We note that the results of \cite{Jastrzebski19SGD} on the Hessian do not suggest but also do not preclude a relation between the gradient-magnitude information and the optimal individual learning rates. The gradient information may also be a good candidate for  computing the individual learning rates.


One common property of RAME and Adam (with fixed $\beta_2$ parameter in (\ref{equ:Adam2}))  is that the individual learning rates of both methods do not decrease monotonically over iterations, which makes it challenging for convergence analysis.  In contrast, the three adaptive gradient methods AMSGrad, PAdam and AdaGrad from literature are designed to ensure the property of monotonically decreasing individual learning rates. We note that, at the moment, Adam has gained more popularity than the above three methods for training various DNN models. It might be the non-monotonicity property of the individual learning rates in Adam that makes it remarkably effective. The above hypothesis provides one motivation in designing RAME in this work.

\subsection{Implementation for different setups of $\xi$}
In this subsection, we study the implementation of RAME. Depending on the parameter $\xi$, the computation for $\boldsymbol{x}_t$  can be implemented in different ways. When $\xi>0$, each coordinate of $|\boldsymbol{m}_t|^q + \xi$ in the denominator is nonzero.  In this case,  $\boldsymbol{x}_t$ can be computed in a traditional manner without worrying about zero-division. 

We now consider the setup $\xi=0$. As $\boldsymbol{m}_t$ is obtained by a weighted summation of the past gradients up to iteration $t$, it may happen that certain coordinates of  $|\boldsymbol{m}_t|^q$ are zero. To avoid zero-division, we can simply combine $\boldsymbol{m}_t$  and $|\boldsymbol{m}_t|^q$ in (\ref{equ:RAME2_}) when updating $\boldsymbol{x}_t$. That is,  $\boldsymbol{x}_t$ can be computed as          
\begin{align}
\boldsymbol{x}_t &= \boldsymbol{x}_{t-1} - \eta_t \frac{\boldsymbol{m}_t}{|\boldsymbol{m}_t|^q } \nonumber \\
&= \boldsymbol{x}_{t-1} - \eta_t \cdot \textrm{sign}{(\boldsymbol{m}_t) } \odot  |\boldsymbol{m}_t|^{1-q},  \label{equ:RAME3}
\end{align}
where the operator $\textrm{sign}(\cdot)$ computes the sign of the vector.  

It is worth pointing out that Adam and other existing adaptive gradient methods do not allow the special setup $\xi=0$.  This is because the dynamics of the second moment $\boldsymbol{v}_t$ is different from those of $\boldsymbol{m}_t$ or $\boldsymbol{g}_t$. They cannot be combined in a similar manner to (\ref{equ:RAME3}).  

\section{A Different Perspective of the Update Expressions of RAME}
\label{sec:reformulation}

In this section, we study deterministic RAME in Alg.~1 under the setup $\{(\eta_t, \xi)=(1, 0)\}$  from a different perspective. To do so, we first revisit an alternative representation of the update expressions of HB under $\{\eta_t=1\}$.   Based on the observations for HB, we then study the update expressions of RAME from a different point of view.   

\subsection{Revisiting HB under $\{\eta_t=1\}$}
It can be shown that the update expressions (\ref{equ:SGDM1})-(\ref{equ:SGDM2}) of HB under the setup $\{\eta_t=1 \}$ can be alternatively represented as \cite{Polyak64CM, Yang2016HB}
\begin{align}
\boldsymbol{x}_{t+1} - \boldsymbol{x}_{t}   &=  -\alpha_t \boldsymbol{g}_t  + \beta_t  (\boldsymbol{x}_t -  \boldsymbol{x}_{t-1}), \label{equ:HB_alt}
\end{align}
where $\alpha_t>0$ and $0\leq \beta_t <1$. It is clear from (\ref{equ:HB_alt}) that the update of $\boldsymbol{x}_{t+1}$ consists of two contributions: one from the current gradient $ \boldsymbol{g}_t$ and the other from the most recent steering vector $(\boldsymbol{x}_t -  \boldsymbol{x}_{t-1})$. In practice, the parameter $\alpha_t$ decreases over  $t$  while $\{\beta_t\}$ are usually set to be close to 1. Therefore, as the iteration index $t$ increases, the steering vector $(\boldsymbol{x}_t -  \boldsymbol{x}_{t-1})$ has an increasing impact on $\boldsymbol{x}_{t+1}$ compared to the gradient $\boldsymbol{g}_t$. The method name  ``\emph{heavy-ball}" indicates that the update $\boldsymbol{x}_{t+1}$ is strongly affected by the most recent steering vector $(\boldsymbol{x}_t -  \boldsymbol{x}_{t-1})$. 
  
Algebraically speaking, (\ref{equ:HB_alt}) can be viewed as a dynamic system  describing the evolution of the steering vectors $\{ \boldsymbol{x}_{i+1} - \boldsymbol{x}_{i}| i=0,1, \ldots \}$ over iterations. $\{\beta_t\}$ are the damping scalars penalizing old steering vectors when computing new ones. 

\subsection{Deterministic RAME under $\{(\eta_t, \xi) =(1, 0)\}$ } 
Thus-far we have briefly studied HB from a dynamic system point of view. In this subsection, we reconsider deterministic RAME  also from a dynamic system perspective. To do so,  we set $\{(\eta_t, \xi) =(1, 0)\}$ in Alg.~1 for RAME. 

We first reformulate the update expressions of deterministic RAME in a similar manner as that of HB, which is presented in a proposition below: 
\begin{proposition}
Let $\{(\eta_t, \xi) =(1, 0)\}$ in Alg.~1. The update expressions of deterministic RAME  can then be reformulated as
\begin{align}
&(\boldsymbol{x}_{t+1} -  \boldsymbol{x}_{t})\odot |\boldsymbol{x}_{t+1} -  \boldsymbol{x}_{t}|^{q/(1-q)}   \nonumber \\
&=  -\alpha_t \boldsymbol{g}_t  + \beta_t  (\boldsymbol{x}_t -  \boldsymbol{x}_{t-1})\odot |\boldsymbol{x}_t -  \boldsymbol{x}_{t-1}|^{q/(1-q)}, 
\label{equ:RAME_dyn}
\end{align}
where $0\leq q < 1$, and the iteration index $t\geq 1$.  
\label{prop:RAME_dyn}
\end{proposition}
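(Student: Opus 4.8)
The plan is to invert the RAME parameter-update so as to express the first moment $\boldsymbol{m}_t$ entirely in terms of the steering vector $\boldsymbol{x}_t-\boldsymbol{x}_{t-1}$, and then substitute this expression back into the momentum recursion of Alg.~1, exactly as the alternative HB form (\ref{equ:HB_alt}) is obtained from (\ref{equ:SGDM1})--(\ref{equ:SGDM2}). So the proof reduces to one algebraic identity followed by routine bookkeeping.

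First I would specialize (\ref{equ:RAME3}) to $\{(\eta_t,\xi)=(1,0)\}$, giving $\boldsymbol{x}_t-\boldsymbol{x}_{t-1}=-\boldsymbol{m}_t/|\boldsymbol{m}_t|^q=-\,\textrm{sign}(\boldsymbol{m}_t)\odot|\boldsymbol{m}_t|^{1-q}$. Taking the coordinate-wise absolute value of the second form yields $|\boldsymbol{x}_t-\boldsymbol{x}_{t-1}|=|\boldsymbol{m}_t|^{1-q}$; since $1-q>0$ the map $s\mapsto s^{1-q}$ is a bijection of $[0,\infty)$, so $|\boldsymbol{m}_t|=|\boldsymbol{x}_t-\boldsymbol{x}_{t-1}|^{1/(1-q)}$, hence $|\boldsymbol{m}_t|^{q}=|\boldsymbol{x}_t-\boldsymbol{x}_{t-1}|^{q/(1-q)}$. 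Substituting this into $\boldsymbol{m}_t=-|\boldsymbol{m}_t|^{q}\odot(\boldsymbol{x}_t-\boldsymbol{x}_{t-1})$ (the first form of (\ref{equ:RAME3}) rearranged) gives the key relation
\begin{align}
\boldsymbol{m}_t \;=\; -\,(\boldsymbol{x}_t-\boldsymbol{x}_{t-1})\odot|\boldsymbol{x}_t-\boldsymbol{x}_{t-1}|^{q/(1-q)}. \nonumber
\end{align}

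With this identity the remainder is mechanical: I insert it into the momentum recursion $\boldsymbol{m}_t=\beta_t\boldsymbol{m}_{t-1}+\alpha_t\boldsymbol{g}_t$ of Alg.~1, using it once at index $t$ and once at the preceding index, move the overall minus sign across, and relabel the iteration counter exactly as in the passage from (\ref{equ:SGDM1})--(\ref{equ:SGDM2}) to (\ref{equ:HB_alt}); this produces (\ref{equ:RAME_dyn}). Two sanity checks: when $q=0$ the identity degenerates to $\boldsymbol{m}_t=-(\boldsymbol{x}_t-\boldsymbol{x}_{t-1})$ and (\ref{equ:RAME_dyn}) collapses to (\ref{equ:HB_alt}), consistent with the fact that $(q,\xi)=(0,0)$ makes RAME coincide with HB; and since $\boldsymbol{m}_0=0$, the identity holds trivially at the first iteration, so there is no boundary issue for $t\ge 1$.

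The one point requiring care — and the only real obstacle — is the set of coordinates at which $\boldsymbol{m}_t$ (equivalently $\boldsymbol{x}_t-\boldsymbol{x}_{t-1}$) vanishes, i.e. the same zero-division situation that motivated the reformulation (\ref{equ:RAME3}) in the first place. There one must check that all of the above manipulations respect the conventions under which (\ref{equ:RAME3}) was written: that $|\boldsymbol{x}_t-\boldsymbol{x}_{t-1}|=|\boldsymbol{m}_t|^{1-q}$ still holds at such a coordinate (both sides equal $0$, using $1-q>0$), and that $\boldsymbol{m}_t=-|\boldsymbol{m}_t|^{q}\odot(\boldsymbol{x}_t-\boldsymbol{x}_{t-1})$ reduces to $0=0$ there, with the convention $0^{q/(1-q)}=1$ entering only the harmless case $q=0$ (where $q/(1-q)=0$). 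Everything else is routine element-wise power and absolute-value algebra.
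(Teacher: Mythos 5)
Your proof is correct and takes essentially the same route as the paper's: the same key identity $\boldsymbol{m}_t = -(\boldsymbol{x}_t - \boldsymbol{x}_{t-1})\odot|\boldsymbol{x}_t - \boldsymbol{x}_{t-1}|^{q/(1-q)}$, obtained by inverting the coordinate-wise relation $|\boldsymbol{x}_t-\boldsymbol{x}_{t-1}| = |\boldsymbol{m}_t|^{1-q}$ and substituting into the momentum recursion. The only difference is direction --- the paper starts from (\ref{equ:RAME_dyn}), defines $\tilde{\boldsymbol{m}}_t$ as the modulated steering vector, and recovers Alg.~1, whereas you go forward from Alg.~1 to (\ref{equ:RAME_dyn}) --- and your explicit treatment of the zero coordinates and of the index relabeling is, if anything, more careful than the paper's.
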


\begin{proof}
We show that (\ref{equ:RAME_dyn}) can be transformed to  the update expressions presented in Alg.~1 under the setup  $\{(\eta_t, \xi) =(1, 0)\}$. Define $\tilde{\boldsymbol{m}}_{t}$ to be 
\begin{align}
\hspace{-2mm}\tilde{\boldsymbol{m}}_{t} \hspace{-0.5mm}=\hspace{-0.5mm} -(\boldsymbol{x}_{t+1} \hspace{-0.5mm}-\hspace{-0.5mm}  \boldsymbol{x}_{t})\odot |\boldsymbol{x}_{t+1} \hspace{-0.5mm}-\hspace{-0.5mm}  \boldsymbol{x}_{t}|^{q/(1-q)} \quad t\geq 0. 
\label{equ:RAME_dyn1}
\end{align}
It is straightforward from (\ref{equ:RAME_dyn}) that the sequence $\{\tilde{\boldsymbol{m}}_{t} \}$ can be computed recursively as 
\begin{align}
\tilde{\boldsymbol{m}}_{t} = \beta_t \tilde{\boldsymbol{m}}_{t-1} +  \alpha_t \boldsymbol{g}_t  \quad t\geq 1,  \label{equ:RAME_dyn2} 
\end{align}
where the minus sign before $\boldsymbol{g}_t $ in (\ref{equ:RAME_dyn}) is cancelled out due to the minus sign in (\ref{equ:RAME_dyn1}).

Next, without loss of generality, we derive an explicit update expression for $\boldsymbol{x}_{t+1}$ in terms of $\tilde{\boldsymbol{m}}_{t}$ based on (\ref{equ:RAME_dyn1}). 
Taking absolute value per-coordinate on both sides of (\ref{equ:RAME_dyn1}) and then applying algebra produces 
\begin{align}
 |\boldsymbol{x}_{t+1} -  \boldsymbol{x}_{t}| = |\tilde{\boldsymbol{m}}_{t}|^{1-q}. \label{equ:RAME_dyn4}
\end{align}
Finally, plugging (\ref{equ:RAME_dyn4})  into (\ref{equ:RAME_dyn1}) and rearranging the quantities in the equation yields 
\begin{align}
\boldsymbol{x}_{t+1} = \boldsymbol{x}_{t} - \frac{ \tilde{\boldsymbol{m}}_{t}}{ | \tilde{\boldsymbol{m}}_{t}|^{q} }. \label{equ:RAME_dyn5}
\end{align} 

By letting $\{\tilde{\boldsymbol{m}}_{t} = {\boldsymbol{m}}_{t}| t\geq 0\}$, it is immediate that the expressions (\ref{equ:RAME_dyn2}) and (\ref{equ:RAME_dyn5}) are identical to those in Alg.~1 under the setup $\{(\eta_t, \xi) =(1, 0)\}$. The proof is complete. 
\end{proof}

Equ.~(\ref{equ:RAME_dyn}) is a natural extension of (\ref{equ:HB_alt}) for HB. Each steering vector $(\boldsymbol{x}_{t+1} - \boldsymbol{x}_{t})$ in (\ref{equ:RAME_dyn}) is modulated by the $\frac{q}{1-q}$th order of its magnitude, which is represented as $|\boldsymbol{x}_{t+1} -  \boldsymbol{x}_{t}|^{q/(1-q)}$. 
In the computation of $\boldsymbol{x}_{t+1} $,  the modulation imposes a larger suppression on those elements of  $(\boldsymbol{x}_{t+1} - \boldsymbol{x}_{t})$ with large magnitude than on the remaining elements.  From an overall perspective, (\ref{equ:RAME_dyn}) can be viewed as a dynamic system describing the evolution of the modulated steering vectors  $\{ (\boldsymbol{x}_{i+1} - \boldsymbol{x}_{i})\odot |\boldsymbol{x}_{i+1} -  \boldsymbol{x}_{i}|^{q/(1-q)} | i=0,1, \ldots \}$ over iterations. 

\section{Convergence Analysis for Deterministic RAME}
\label{sec:analysis}

In this section, we provide convergence analysis for employing deterministic RAME to solve $L$-smooth nonconvex optimization. 
Similarly to Adam with fixed parameter $\beta_2$, the individual learning rates of RAME $\{\frac{1}{|\boldsymbol{m}_t|^q+\xi} | t=1,2, \ldots\}$ are not guaranteed to decrease monotonically over iterations. Therefore, the approaches in  \cite{Reddi18Amsgrad, Zhou18con_PAdam, Chen18con_Adam} for analyzing  AMSGrad, PAdam and AdaGrad can not be exploited to study either Adam or RAME.  To our best knowledge, the recent work \cite{Adam18Converge} is the first that provides a rigorous convergence analysis for deterministic Adam  for solving $L$-smooth nonconvex optimization.  In the following, we study RAME by following an analysis similar to the one in \cite{Adam18Converge} for Adam.
  
We first provide the definition of $L$-smoothness. 

\begin{definition} [$L$-smoothness] Suppose $f: \mathbb{R}^d \rightarrow \mathbb{R}$ is differentiable. Then $f$ is $L$-smooth for some $L>0$ if for any $\boldsymbol{x}, \boldsymbol{y} \in \mathbb{R}^d$, we have 
\begin{align}
f(\boldsymbol{y}) \leq f(\boldsymbol{x}) + \langle \nabla f(\boldsymbol{x}), \boldsymbol{y}-\boldsymbol{x} \rangle + \frac{L}{2} \|\boldsymbol{y} - \boldsymbol{x} \|^2. 
\label{equ:sm1}
\end{align}
Furthermore, $f(\boldsymbol{x})$ is lower bounded, i.e., $\inf_{\boldsymbol{x}}f(\boldsymbol{x})> - \infty$. 
\label{def_smooth}
\end{definition}

Upon introducing $L$-smoothness, we present the convergence results of deterministic RAME in a theorem below: 
\begin{theorem}
Suppose $f: \mathbb{R}^d \rightarrow \mathbb{R} $ is an $L$-smooth function and the $l_{\infty}$ norm of its gradient $\nabla f(\boldsymbol{x})$ is upper bounded by $\|\nabla f(\boldsymbol{x}) \|_{\infty} \leq \sigma$. Let $\xi>0$ and $(\beta_t, \alpha_t) = (\beta, \alpha)$ in Alg. ~1. For any $\epsilon >0$, if the two parameters $(\beta, \alpha)$ are selected to satisfy  
\begin{align}
\beta  &< \frac{\epsilon}{ \sqrt{d} \sigma + \epsilon} \label{equ:beta_bound} \\
\alpha &<  \left( \frac{\xi (1-\beta)^q}{\sigma^q}  \left( \frac{(1-\beta) \epsilon }{\beta \sqrt{d} \sigma} -1\right)\right)^{1/q}, \label{equ:alpha_bound}
\end{align}
then there exist an iteration index $T$ and a sequence of parameters $\{\eta_t>0| t=1, 2, \ldots, T\}$ such that 
\begin{align}
\min_{t=2,\ldots, T+1} \|\nabla f(\boldsymbol{x}_t) \|_2 \leq \epsilon. \nonumber
\end{align}
\label{theorem:converge}
\end{theorem}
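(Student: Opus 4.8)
The plan is to mimic the standard descent-lemma argument for momentum methods, exactly as in \cite{Adam18Converge} for deterministic Adam, but adapted to the nonlinear map $\boldsymbol{h}(\boldsymbol{m}_t)=\boldsymbol{m}_t/(|\boldsymbol{m}_t|^q+\xi)$. First I would set up the key structural observation: under $\xi>0$ the update $\boldsymbol{x}_t=\boldsymbol{x}_{t-1}-\eta_t\boldsymbol{h}(\boldsymbol{m}_t)$ can be written as a preconditioned momentum step, $\boldsymbol{x}_t-\boldsymbol{x}_{t-1}=-\eta_t\boldsymbol{A}_t\boldsymbol{m}_t$ where $\boldsymbol{A}_t=\mathrm{diag}\!\big(1/(|\boldsymbol{m}_t|^q+\xi)\big)$ is a diagonal positive-definite matrix with eigenvalues sandwiched between $1/(\sigma_m^q+\xi)$ and $1/\xi$, where $\sigma_m:=\alpha\sigma/(1-\beta)$ is the uniform bound on $\|\boldsymbol{m}_t\|_\infty$ coming from $\|\nabla f\|_\infty\le\sigma$ and the geometric-series recursion $\boldsymbol{m}_t=\beta\boldsymbol{m}_{t-1}+\alpha\boldsymbol{g}_t$. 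Establishing this bound $\|\boldsymbol{m}_t\|_\infty\le\sigma_m$ is the first lemma-level step, and it is what makes $\boldsymbol{A}_t$ uniformly bounded from below and above.

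Next I would apply $L$-smoothness (Definition~\ref{def_smooth}) along the step $\boldsymbol{y}=\boldsymbol{x}_t$, $\boldsymbol{x}=\boldsymbol{x}_{t-1}$, to obtain
\begin{align*}
f(\boldsymbol{x}_t)\le f(\boldsymbol{x}_{t-1})-\eta_t\langle\nabla f(\boldsymbol{x}_{t-1}),\boldsymbol{A}_t\boldsymbol{m}_t\rangle+\tfrac{L}{2}\eta_t^2\|\boldsymbol{A}_t\boldsymbol{m}_t\|^2.
\end{align*}
Then I would split $\boldsymbol{m}_t=\alpha\boldsymbol{g}_t+\beta\boldsymbol{m}_{t-1}$ inside the inner product. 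The $\alpha\boldsymbol{g}_t=\alpha\nabla f(\boldsymbol{x}_{t-1})$ term gives the crucial negative contribution $-\eta_t\alpha\langle\nabla f(\boldsymbol{x}_{t-1}),\boldsymbol{A}_t\nabla f(\boldsymbol{x}_{t-1})\rangle\le-\eta_t\alpha\lambda_{\min}(\boldsymbol{A}_t)\|\nabla f(\boldsymbol{x}_{t-1})\|^2$; the stale momentum term $\beta\boldsymbol{m}_{t-1}$ is the ``bad'' cross term, and it must be controlled by the conditions \eqref{equ:beta_bound}--\eqref{equ:alpha_bound}. The idea is that \eqref{equ:beta_bound}--\eqref{equ:alpha_bound} are exactly engineered so that, whenever $\|\nabla f(\boldsymbol{x}_{t-1})\|_2>\epsilon$, the descent term dominates: one bounds $|\langle\nabla f(\boldsymbol{x}_{t-1}),\boldsymbol{A}_t\boldsymbol{m}_{t-1}\rangle|\le\lambda_{\max}(\boldsymbol{A}_t)\|\nabla f(\boldsymbol{x}_{t-1})\|_2\cdot\sqrt d\,\|\boldsymbol{m}_{t-1}\|_\infty$ and uses the bounds on $\lambda_{\min},\lambda_{\max}$ together with $\|\boldsymbol{m}_{t-1}\|_\infty\le\sigma_m$ to show the net first-order contribution is at most $-c\,\eta_t\epsilon^2$ for some constant $c>0$. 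I would verify that \eqref{equ:alpha_bound} is precisely the inequality $\beta\sqrt d\,\sigma_m\cdot\lambda_{\max}<(1-\beta)\,\lambda_{\min}\cdot\alpha\cdot\text{(something)}$ after substituting $\lambda_{\min}=1/(\sigma_m^q+\xi)$ and $\lambda_{\max}=1/\xi$ — i.e., reverse-engineer the stated algebraic condition from this requirement.

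Having a per-step decrease $f(\boldsymbol{x}_t)\le f(\boldsymbol{x}_{t-1})-c\,\eta_t\epsilon^2+\tfrac{L}{2}\eta_t^2\lambda_{\max}^2\sigma_m^2 d$ valid on any iteration with $\|\nabla f(\boldsymbol{x}_{t-1})\|_2>\epsilon$, I would choose $\eta_t$ small enough (e.g.\ constant $\eta_t=\eta$ with $\eta\le c\epsilon^2/(L\lambda_{\max}^2\sigma_m^2 d)$) so the quadratic term is absorbed, giving a strict decrease of order $\eta\epsilon^2$ per ``bad'' step. Since $f$ is lower bounded by $f^\star:=\inf f>-\infty$, the number of bad steps is at most $(f(\boldsymbol{x}_0)-f^\star)/(c'\eta\epsilon^2)$; hence for $T$ exceeding this count, some iterate $\boldsymbol{x}_t$ with $t\in\{2,\dots,T+1\}$ must satisfy $\|\nabla f(\boldsymbol{x}_t)\|_2\le\epsilon$, which is the claim. (The index shift to $t\ge2$ reflects that the bound on $\|\boldsymbol{m}_{t-1}\|_\infty$ uses the full recursion, and that the gradient appearing in the descent step at iteration $t$ is $\nabla f(\boldsymbol{x}_{t-1})$.)

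The main obstacle I anticipate is the cross-term bookkeeping: because $\boldsymbol{A}_t$ depends on $\boldsymbol{m}_t$, the descent term $\langle\nabla f(\boldsymbol{x}_{t-1}),\boldsymbol{A}_t\nabla f(\boldsymbol{x}_{t-1})\rangle$ uses the \emph{current}-step preconditioner while the natural lower bound on it should not be circular; one resolves this by using only the uniform eigenvalue bounds $\lambda_{\min}(\boldsymbol{A}_t)\ge 1/(\sigma_m^q+\xi)$ and $\lambda_{\max}(\boldsymbol{A}_t)\le 1/\xi$, which hold deterministically for all $t$ once $\|\boldsymbol{m}_t\|_\infty\le\sigma_m$ is established — so no circularity arises. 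The second delicate point is checking that the stated closed-form condition \eqref{equ:alpha_bound} (with its $1/q$ exponent and the factor $\xi(1-\beta)^q/\sigma^q$) is algebraically equivalent to "net first-order contribution is negative of the right order," which is a direct but slightly tedious substitution of $\sigma_m=\alpha\sigma/(1-\beta)$ into $\beta\sqrt d\,\sigma_m/\xi<(1-\beta)\,\epsilon\,\big/\,(\sigma_m^q+\xi)$ and solving for $\alpha$; I would carry this out carefully and note that \eqref{equ:beta_bound} is the $\alpha\to 0$ limiting feasibility condition that guarantees the right-hand side of \eqref{equ:alpha_bound} is positive.
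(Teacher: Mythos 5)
Your proposal is correct and follows essentially the same route as the paper's proof: the same bound $\|\boldsymbol{m}_t\|_\infty\leq\alpha\sigma/(1-\beta)$ via the geometric recursion, the same eigenvalue sandwich on the diagonal preconditioner, the same descent-lemma split into a negative $\alpha\boldsymbol{g}_t$ term and a stale-momentum cross term (your one-step split $\beta\boldsymbol{m}_{t-1}$ yields exactly the paper's $\alpha\sqrt{d}\sigma(\beta-\beta^t)/(1-\beta)$ bound), and the same reverse-engineering of (\ref{equ:beta_bound})--(\ref{equ:alpha_bound}). The only cosmetic differences are that the paper picks the per-iteration optimal $\eta_t^{*}$ minimizing the quadratic upper bound rather than a small constant $\eta$, and phrases the final counting step as an explicit contradiction of $\|\boldsymbol{g}_t\|_2>\epsilon$ for all $t$; both choices are equivalent under the theorem's existential quantification over $\{\eta_t\}$.
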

\begin{proof}
See proof sketch in Appendix \ref{appendix:RAME_converge}. The basic idea of the argument is from the proof for Theorem 3.4 in \cite{Adam18Converge} for analyzing deterministic Adam. 
\end{proof}

In practice, the parameter $\beta$ is usually set to be constant. The condition (\ref{equ:beta_bound}) is therefore rather strict. It remains open to tighten the convergence analysis to derive a loose condition on $\beta$. In Theorem~\ref{theorem:converge}, $\alpha$ can be treated the learning rate while the parameters $\{\eta_t>0| t=1, 2, \ldots, T\}$ can be taken as the additional regulation parameters for the convergence results to hold.   

\begin{figure*}[t]
\centering
\begin{footnotesize}
  \includegraphics[width=170mm]{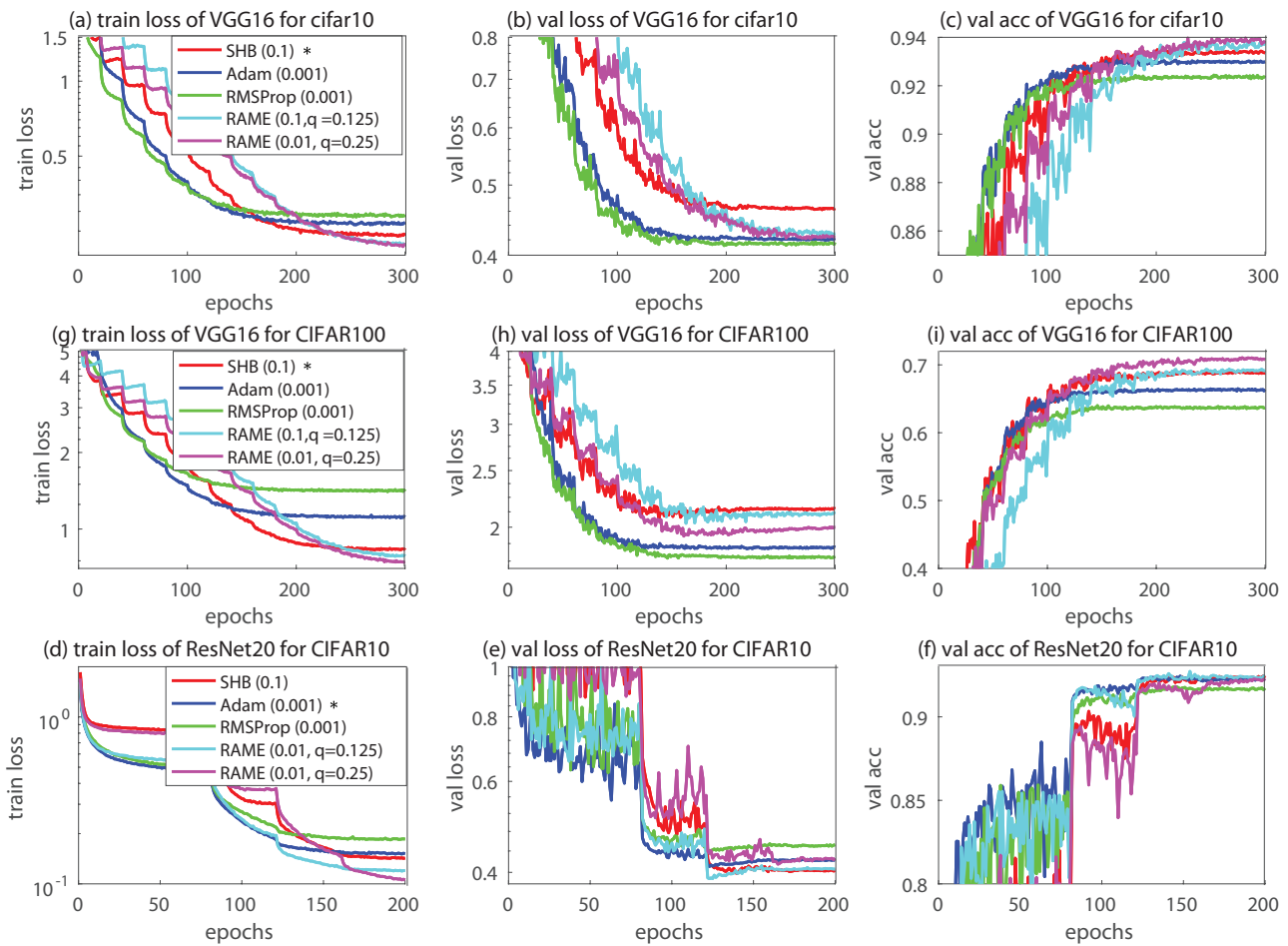}
\end{footnotesize}
\caption{\small Performance comparison of the four methods for training VGG16 over CIFAR10 and CIFAR100, and for training ResNet20 over CIFAR10. The symbol $*$ in the plots indicates that SHB was empoyed in the original open-sources for training VGG16 while Adam was used for ResNet20. The selected initial learning rate for each training method is displayed in the respective bracket. }
\label{fig:vgg_resent_vgg}
\vspace{-6mm}
\end{figure*}

\section{Experimental Results}
\label{sec:experiment}

We conduct experiments for two typical problems in deep learning community, which are classification and segmentation of images. The segmentation problem can be viewed as performing regression as the objective function is a combination of binary cross-entropy and  Dice loss on an image-pixel level \cite{Milletari16VNet}.    

\subsection{Experimental setup}
In the experiment, four training methods were evaluated using the Keras-tensorflow platform, which are RAME, SHB,  Adam, and RMSprop. To make a fair comparison, all the experiments were conducted based on open-source implementations, links of which will be provided for each task later on.  In our implementation, only the training methods and initial learning rates were changed in the original codes for algorithmic comparison. 


We now briefly explain the parameter setup for each training method in the experiment. Considering SHB, the parameters $\{\alpha_t\}$ are taken as the common learning rates.  The setup $(\beta_t, \eta_t) = (0.9, 1)$ of SHB (see (\ref{equ:SGDM1})-(\ref{equ:SGDM2})) was inherited from the open-source for training VGG16, which will be studied in Subsection \ref{subsec:VGGResNet} later on.  The parameters ($\beta_1, \beta_2, \xi$) of Adam were set to $(0.9, 0.999,10^{-7} )$, which are the default values of the Keras platform. This is because the open source for ResNet20 (see Subsection \ref{subsec:VGGResNet}) recommends to use Adam with default values.  Similarly, the parameters of RMSprop were set to the default values of the Keras platform. 

 As RAME is a natural extension of HB (or SHB), its parameters were set to $(\beta_t, \eta_t, \xi) = (0.9, 1, 0)$ and $q = (0.125 \textrm{ and } 0.25)$ as stated in Alg.~1. Our main motivation for choosing $\xi=0$ is because with this setup, deterministic RAME possesses a unified update expression in terms of the modulated steering vectors as summarized in Proposition~\ref{prop:RAME_dyn}. 

Finally, we note that selection of the initial learning rate is essential for the success of a training method.  As different training methods are designed by following respective strategies, their optimal initial learning rates are usually different (see \cite{Chen18Padam} for an empirical study of several training methods). In our experiment, five initial learning rates were tested when employing each method in training a DNN, which are given by $\{10^{-i} | i=1,2\ldots, 5 \}$. Only the convergence result of the initial learning rate that produces the best validation performance was selected for comparison.


\vspace{-3mm}
\subsection{On training VGG16 and ResNet20 over CIFAR10 and CIFAR100} 
\vspace{-1mm}
\label{subsec:VGGResNet}

In the first experiment, we consider training VGG16 \cite{Simonyan16DCNN}  and ResNet20 \cite{He15ResNet}, which represent two popular convolutional neural network (CNN)  architectures in deep learning. We adopt the existing open sources\footnote{\hspace{0mm} The code for VGG16 is from https://github.com/geifmany/cifar-vgg \newline The code for ResNet20 is adopted from https://github.com/keras-team/keras/blob/master/examples/cifar10$\_$resnet.py} for three tasks, which are training VGG16 over CIFAR10 and CIFAR100, and training ResNet20 over CIFAR10. 

We notice that the original implementation for VGG16 employs SHB while the one for ResNet20 uses Adam. The above open-sources were selected on purpose to minimize algorithmic bias that favours the original training method.  
  
The convergence behaviours of the four methods are displayed in Fig.~\ref{fig:vgg_resent_vgg}.  It is seen that the initial learning rate of SHB is the largest, followed by those of RAME for $q = 0.125 \textrm{ and } 0.25$. If we treat SHB as a special case of RAME with $q=0$, it is clear that as the parameter $q$ increases from $0$ to $0.125$ and finally to $0.25$,  the best initial learning rate decreases accordingly. This might be because as $q$ increases, the individual learning rates $\{\frac{1}{|\boldsymbol{m}_t|^q}\}$ may have increasing impact on the parameter update, thus only requiring a decreasing contribution from the common learning rates $\{\alpha_t\}$.  The above observations  drawn from RAME are in line with the fact that both Adam and RMSprop have the same smallest initial learning rate. 


It is observed from Fig.~\ref{fig:vgg_resent_vgg} that the validation losses and accuracies of Adam and AMSprop are not consistent for VGG16 compared to those of SHB and RAME. That is, both methods produce low validation losses, while their validation accuracies are not high. The true objective function for classification is binary, representing correct or incorrect recognition decisions over one-of-a-discrete-set. To facilitate the training procedure,  a continuous objective function in the form of cross-entropy is introduced as an approximate surrogate. The variation of the functional loss in regions of given decision and ground truth are not important.
Therefore, validation accuracy can be seen as a more reliable measurement than the validation loss when considering a classification problem. 

By inspection of  the training losses and validation accuracies of the four methods in Fig. \ref{fig:vgg_resent_vgg}, we can conclude that RAME outperforms the other three training methods for VGG16 at the end of the training procedure even though it converges slowly in the beginning. Furthermore, as the parameter $q$ increases from $0.125$ to $0.25$, RAME delivers decreasing final training loss and increasing final validation accuracy. Considering ResNet20, it is seen that RAME again yields low final training losses compared to the other three methods.  As for the final validation accuracies, it performs equally well as SHB and Adam.

\begin{figure}[t]
\centering
\begin{footnotesize}
  \includegraphics[width=75mm]{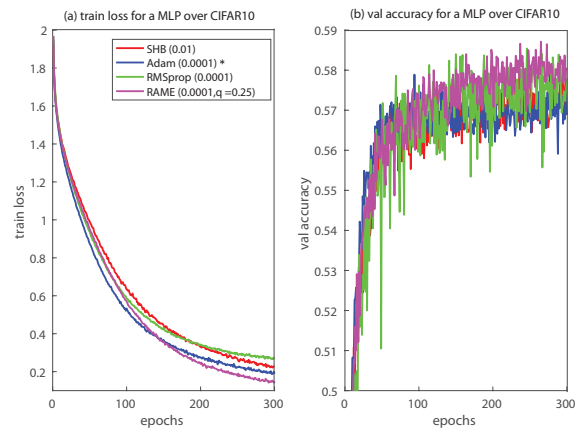}
\end{footnotesize}
\caption{\small Performance comparison of the four methods for training an MLP over CIFAR10. The symbol $*$ indicates that Adam was empoyed in the original open-source.  The selected initial learning rate for each training method is displayed in the respective bracket.  }
\label{fig:MLP}
\vspace{-6mm}
\end{figure}

\begin{figure*}[t]
\centering
\begin{footnotesize}
  \includegraphics[width=180mm]{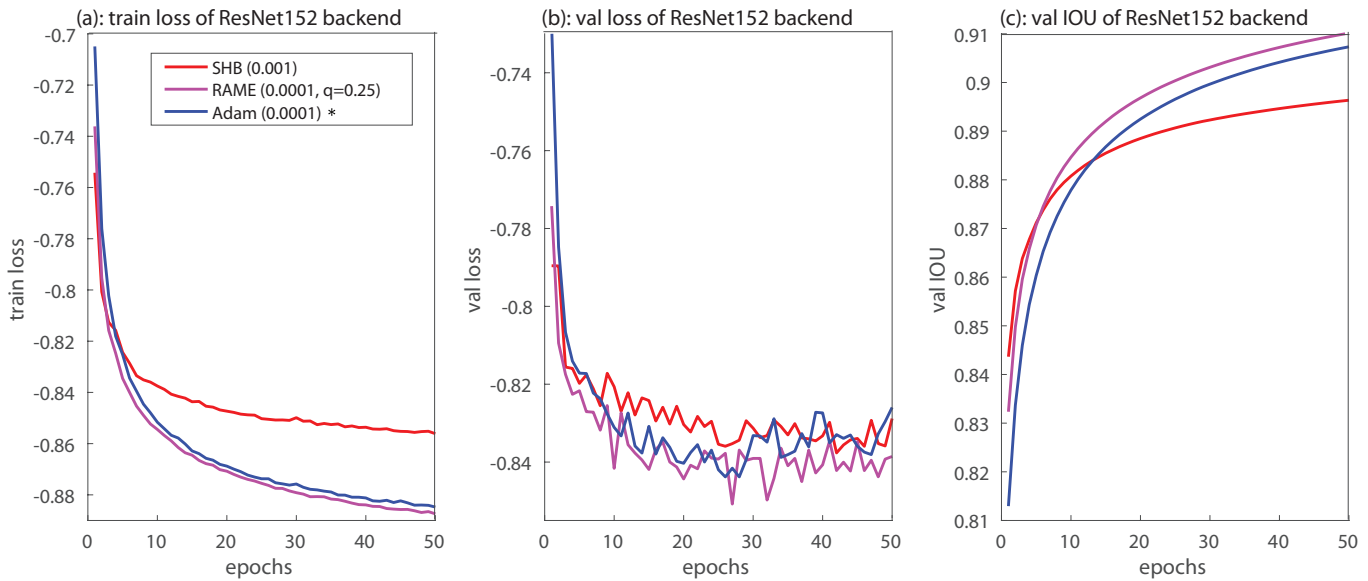}
\end{footnotesize}
\caption{\small On fine-tunning ResNet152 backend for people semantic segmentation.  The symbol $*$ indicates that Adam was employed in the original open-source. The selected initial learning rate for each training method is displayed in the respective bracket.  }
\label{fig:coco}
\vspace{-5mm}
\end{figure*}

\vspace{-1mm}
\subsection{On training a multi-layer perceptron (MLP)} 
\vspace{-1mm}
 In addition to CNNs, we also tested an MLP for classification over CIFAR10, which is in fact a feedforward fully connected neural network. Our primary research goal is to study the convergence behaviours of the four methods for training an MLP rather than producing high validation accuracy. The implementation is based on the open source\footnote{The code for MLP is from https://github.com/aidiary/keras-examples/blob/master/mlp/cifar10.py} available on the Keras platform where Adam with default setup is recommended in the original implementation. The tested MLP consists of four layers with neural numbers of $(1024-512-512-10)$. 

Fig.~\ref{fig:MLP} displays the convergence results of the four training methods. It is seen that RAME converges slower than Adam in the beginning. After a certain number of iterations, it  converges faster than the other three methods and  produces the lowest final training loss.  As for the validation accuracies, the performance of RAME and RMSprop are similar. Both methods produce slightly higher accuracies than Adam and SHB.

\subsection{On semantic segmentation}

We also conduct algorithmic comparison for people semantic segmentation, where the goal is to identify all people in an image on a pixel level  \cite{Long15segmentation}.  To facilitate the training procedure and achieve high accuracy, one approach is to make use of a well-trained neural network for other purposes as the backend for semantic segmentation. In this work, we choose the version of ResNet152 \cite{Lin2016Pyramid, He15ResNet} that is trained for classification over ImageNet as the backend. We adopt an open source implementation developed for a Kaggle competition \footnote{The link is https://github.com/selimsef/dsb2018$\_$topcoders} for our experiment, where Adam with default parameter setup was used for training the network. 
As the the main body of the network already carries informative features of 1000 objects in ImageNet database, we only need to fine-tune the network for the segmentation task.  

In the experiment,  the Microsoft COCO-2017 database \cite{COCO}  was employed for training the network. The numbers of images for training and validation are 108344 and 4614, respectively.  Roughly half the number of images in both the training and validation sets contains persons.  
 
We focus on the performance of SHB, Adam, and RAME (The method RMSprop suffers from significant overfitting effect, and the result is left out to avoid distraction). 
Each method was fine-tuned for 50 epochs.  Further, each epoch took about one and a half hour using a Nvidia 1080 Ti GPU. During the training process, the so-called \emph{Intersection over Union} (IOU) was measured along with the functional loss. The metric IOU reflects the accuracy of the correctly labelled foreground pixels of people in an image on average.    

The convergence results of the three training methods are displayed in Fig. \ref{fig:coco}.  It is clear that RAME outperforms both SHB and Adam w.r.t. the training loss, validation loss and IOU. On the other hand,  the training loss of SHB is noticeably higher than those of Adam and RAME. This suggests that the introduction of individual learning rates in SHB accelerates the convergence speed.

\subsection{Overall observations from the experiments}

All the above experiments indicate that RAME converges faster than SHB at a later stage of the training procedure. Furthermore, RAME exhibits promising generalisation performance over the validation datasets compared to SHB. The results confirm that it is indeed beneficial to choose smaller learning rates for the elements of $\boldsymbol{m}_t$ with large magnitudes in SHB. 


If we take into account the fact that RAME is designed by making a minor modification to SHB, the new method is both simple and effective. Unlike Adam and RMSprop, RAME does not need to track the second moment of gradients. Instead, the new method only uses a nonlinear function $\boldsymbol{h}(\boldsymbol{m}_t)$ of the most recent first moment for parameter update. 

\vspace{-2mm}
\section{Conclusions}
\label{sec:conclusion}
In this paper, we have proposed a new adaptive gradient method for training DNNs, which is referred to as rapidly adapting moment estimation (RAME). The new method is designed by computing the individual learning rates based on the most recent first moment of gradients rather than the traditional second moment of gradients. Compared to the popular training method Adam, RAME saves a memory space of the DNN model size by avoiding the storage of the second moment. One nice property of RAME is that its update expression can be interpreted as describing the evolution of the modulated steering vectors $\{ (\boldsymbol{x}_{i+1} - \boldsymbol{x}_{i})\odot |\boldsymbol{x}_{i+1} -  \boldsymbol{x}_{i}|^{q/(1-q)} | i=0,1, \ldots \}$ over iterations while other adaptive gradient methods do not have such a property to our best knowledge.  Experimental results for training a number DNNs models demonstrate that RAME produces promising convergence performance in comparison to SHB, Adam, and RMSprop.

One future research direction would be to study the possibility of combining RAME and classical adaptive gradient methods such as Adam for designing a more effective training method. 
 
 \vspace{-2mm}
\section{Acknowledgements}

The research is financially supported by Nippon Telegraph and Telephone (NTT) Corporation, Japan, in a form of an industrial project between UTS and NTT.  

We gratefully acknowledge the assistance offered by Dr Haopeng Li (haopeng.li@qamcom.se) from Qamcom Research and Technology, Sweden, on implementation of the people segmentation experiment.

\appendices

\vspace{-2mm}
\section{ Proof Sketch for Theorem \ref{theorem:converge}}
\label{appendix:RAME_converge}

We study the convergence of Alg.~1 by following a similar argument in  \cite{Adam18Converge} for deterministic Adam. 
That is we will start from the assumption $\{\|\boldsymbol{g}_t\|>\epsilon |t\geq 1  \} $ and then show that it will lead to a contradiction. 

By following the analysis in \cite{Adam18Converge}, the first step is to find the optimal parameter $\eta_t^{*}$ that leads to a tight upper bound for the functional difference $f(\boldsymbol{x}_{t+1}) - f(\boldsymbol{x}_{t})$.  By using the inequality (\ref{equ:sm1}) due to $L$-smoothness, the functional difference at iteration $t$ can be upper bounded as  
\begin{align}
&f(\boldsymbol{x}_{t+1}) - f(\boldsymbol{x}_{t}) \nonumber \\
&\leq \langle \nabla f(\boldsymbol{x}_t), \boldsymbol{x}_{t+1} - \boldsymbol{x}_t \rangle + \frac{L}{2} \|\boldsymbol{x}_{t+1} - \boldsymbol{x}_t \|_2^2 \nonumber \\  
&\stackrel{(a)}{=} \hspace{-0.6mm} \eta_t \left(-  \hspace{-0.6mm} \left\langle \boldsymbol{g}_t, \frac{\boldsymbol{m}_t}{|\boldsymbol{m}_t|^q+ \xi} \right\rangle  \hspace{-0.6mm}+ \hspace{-0.6mm} \frac{L \eta_t}{2} \left\|\frac{\boldsymbol{m}_t  }{|\boldsymbol{m}_t |^q  \hspace{-0.3mm}+ \hspace{-0.3mm} \xi } \right\|_2^2 \right) \hspace{-0.6mm},\hspace{-2mm} \label{equ:conv_1}
\end{align}
where step $(a)$ follows from the update expression of $\boldsymbol{x}_{t+1}$ in Alg.~1.  It is noted that the RHS of (\ref{equ:conv_1}) is a quadratic function of $\eta_t$. It can be shown that when
\begin{align}
\eta_t=\eta_t^{*} = \frac{1}{2} \cdot \frac{\left\langle \boldsymbol{g}_t, \frac{\boldsymbol{m}_t}{|\boldsymbol{m}_t|^q + \xi} \right\rangle }{\frac{L}{2} \|  \frac{\boldsymbol{m}_t}{|\boldsymbol{m}_t|^q + \xi}   \|^2 }, \label{equ:conv_2}
\end{align}
the LHS of (\ref{equ:conv_1}) receives a tight upper bound, which is given by 
\begin{align}
f(\boldsymbol{x}_{t+1}) - f(\boldsymbol{x}_{t}) \leq -\frac{1}{2L} \cdot \frac{\left\langle \boldsymbol{g}_t, \frac{\boldsymbol{m}_t}{|\boldsymbol{m}_t|^q + \xi} \right\rangle^2}{ \left\| {\frac{\boldsymbol{m}_t}{|\boldsymbol{m}_t|^q + \xi}}  \right \|_2^2 },  \label{equ:conv_3}
\end{align}
which indicates that the functional cost $f(x_t)$ decreases over iteration $t$. 

As described in \cite{Adam18Converge}, the next step is to measure how close it is  between the upper bound in (\ref{equ:conv_3}) and zero. To do so, it is required to derive an upper bound for $\left\| {\frac{\boldsymbol{m}_t}{|\boldsymbol{m}_t|^q + \xi}}  \right \|_2$ and a lower bound for $\left\langle \boldsymbol{g}_t, \frac{\boldsymbol{m}_t}{|\boldsymbol{m}_t|^q + \xi} \right\rangle$, respectively. 

We first present two lemmas which will be used for analysis later on: 
\begin{lemma} Under the setup $(\beta_t, \alpha_t)=(\beta,\alpha)$, the first moment $\boldsymbol{m}_t$ in Alg.~1 can be represented in terms of  $\{\boldsymbol{g}_k | k = 1,\ldots, t\}$ as
\begin{align}
\boldsymbol{m}_t = \alpha \sum_{k=1}^t \beta^{t-k} \boldsymbol{g}_k. \label{equ:conv_4}
\end{align}
Correspondingly, under the assumption $\|\boldsymbol{g}_k \|_{\infty} \leq \sigma $ for all $k\geq 1$, the $l_{\infty}$ norm of $\boldsymbol{m}_t$ is upper bounded as
\begin{align}
\|\boldsymbol{m}_t \|_{\infty} \leq \frac{\alpha \sigma (1-\beta^t)}{1-\beta}.  \label{equ:conv_5}
\end{align}
\end{lemma}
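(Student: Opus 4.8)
The plan is to prove the two claims of the lemma in order: the closed-form expansion (\ref{equ:conv_4}) by induction on $t$, and then the $l_{\infty}$ bound (\ref{equ:conv_5}) by feeding that expansion into the triangle inequality together with the geometric-series identity.

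First I would establish (\ref{equ:conv_4}). With the constant choices $\beta_t=\beta$, $\alpha_t=\alpha$ and the initialization $\boldsymbol{m}_0=0$ from Alg.~1, the recursion $\boldsymbol{m}_t=\beta\boldsymbol{m}_{t-1}+\alpha\boldsymbol{g}_t$ gives $\boldsymbol{m}_1=\alpha\boldsymbol{g}_1$, which is exactly the right-hand side of (\ref{equ:conv_4}) for $t=1$. Assuming the formula holds at index $t-1$, i.e.\ $\boldsymbol{m}_{t-1}=\alpha\sum_{k=1}^{t-1}\beta^{t-1-k}\boldsymbol{g}_k$, I would multiply by $\beta$ — which raises each exponent by one — and add the new term $\alpha\boldsymbol{g}_t$, which contributes the $k=t$ summand with weight $\beta^{0}$; this yields $\boldsymbol{m}_t=\alpha\sum_{k=1}^{t}\beta^{t-k}\boldsymbol{g}_k$ and closes the induction.

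Next I would derive (\ref{equ:conv_5}). Applying the triangle inequality for $\|\cdot\|_{\infty}$ to the expansion (\ref{equ:conv_4}), pulling out the nonnegative scalars $\alpha\beta^{t-k}$, and using the assumed bound $\|\boldsymbol{g}_k\|_{\infty}\leq\sigma$ on each term gives
\[
\|\boldsymbol{m}_t\|_{\infty}\leq\alpha\sum_{k=1}^{t}\beta^{t-k}\|\boldsymbol{g}_k\|_{\infty}\leq\alpha\sigma\sum_{k=1}^{t}\beta^{t-k}.
\]
Re-indexing by $j=t-k$ turns the last sum into $\sum_{j=0}^{t-1}\beta^{j}$, and since $0\leq\beta<1$ this truncated geometric sum equals $(1-\beta^{t})/(1-\beta)$, which is (\ref{equ:conv_5}).

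I do not expect a genuine obstacle here, since the argument is elementary. The only points requiring a little care are the index bookkeeping in the inductive step — verifying that the multiplication by $\beta$ shifts the exponents correctly and that the appended gradient enters with exponent zero — and the explicit use of $0\leq\beta<1$, which is what makes both the term-by-term $l_{\infty}$ bound and the closed form for the truncated geometric series legitimate. These are precisely the quantities (the weights $\beta^{t-k}$ and the bound $\alpha\sigma(1-\beta^{t})/(1-\beta)$ on $\|\boldsymbol{m}_t\|_{\infty}$) that the remaining steps need in order to bound $\bigl\|\boldsymbol{m}_t/(|\boldsymbol{m}_t|^{q}+\xi)\bigr\|_2$ from above and $\langle\boldsymbol{g}_t,\boldsymbol{m}_t/(|\boldsymbol{m}_t|^{q}+\xi)\rangle$ from below in the proof of Theorem~\ref{theorem:converge}.
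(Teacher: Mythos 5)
Your proof is correct: the induction from $\boldsymbol{m}_0=0$ gives the closed form (\ref{equ:conv_4}), and the triangle inequality plus the truncated geometric series gives (\ref{equ:conv_5}). The paper states this lemma without proof, treating it as elementary, and your argument is exactly the standard one it implicitly relies on.
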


\begin{lemma}
The minimum and maximum eigenvalue of the diagonal matrix $\textrm{diag}(|\boldsymbol{m}_t|^q + \xi)$ satisfy  
\begin{align}
 &\lambda_{max}(\textrm{diag}(|\boldsymbol{m}_t|^q + \xi))  \leq  \frac{\alpha^q  \sigma^q(1-\beta^t)^q}{ (1-\beta)^q }+\xi \label{equ:conv_6} 
\end{align}
\begin{align}  
 & \lambda_{min}(\textrm{diag}(|\boldsymbol{m}_t|^q + \xi))  \geq \xi.  \qquad \qquad  \label{equ:conv_7}
\end{align}
\end{lemma}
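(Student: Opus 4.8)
The plan is to exploit the fact that $\textrm{diag}(|\boldsymbol{m}_t|^q + \xi)$ is a diagonal matrix with nonnegative entries, so its singular values are exactly its diagonal entries $\{|m_{t,i}|^q + \xi\}_{i=1}^d$. Consequently $\lambda_{\max}(\textrm{diag}(|\boldsymbol{m}_t|^q + \xi)) = \max_{i} (|m_{t,i}|^q + \xi)$ and $\lambda_{\min}(\textrm{diag}(|\boldsymbol{m}_t|^q + \xi)) = \min_{i} (|m_{t,i}|^q + \xi)$. Both claimed bounds then reduce to elementary scalar estimates on these per-coordinate quantities, and the whole argument is driven by the monotonicity of the power map together with Lemma~1.

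For the lower bound (\ref{equ:conv_7}), I would observe that $|m_{t,i}|^q \geq 0$ for every coordinate $i$, since $|m_{t,i}| \geq 0$ and $q \geq 0$. Hence each diagonal entry satisfies $|m_{t,i}|^q + \xi \geq \xi$, and taking the minimum over $i$ yields $\lambda_{\min} \geq \xi$ immediately, with no further input required.

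For the upper bound (\ref{equ:conv_6}), the key observation is that the map $s \mapsto s^q$ is monotonically nondecreasing on $[0,\infty)$ because $0 \leq q < 1$. This lets me interchange the maximum with the power, $\max_i |m_{t,i}|^q = (\max_i |m_{t,i}|)^q = \|\boldsymbol{m}_t\|_{\infty}^q$, thereby identifying the largest diagonal entry (minus $\xi$) with the $l_{\infty}$ norm of $\boldsymbol{m}_t$ raised to the power $q$. I would then invoke the bound (\ref{equ:conv_5}) from Lemma~1, namely $\|\boldsymbol{m}_t\|_{\infty} \leq \alpha\sigma(1-\beta^t)/(1-\beta)$, raise both sides to the power $q$ (preserving the inequality by the same monotonicity), and finally add $\xi$ to recover the stated right-hand side $\alpha^q \sigma^q (1-\beta^t)^q/(1-\beta)^q + \xi$.

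The argument is almost entirely routine; the only point demanding care is that the monotonicity of $s \mapsto s^q$ be applied consistently, so that both the interchange $\max_i(\cdot)^q = (\max_i \cdot)^q$ and the step of raising the Lemma~1 estimate to the power $q$ are properly justified. I do not anticipate any genuine obstacle beyond this bookkeeping of combining the diagonal-matrix observation with the $l_{\infty}$ bound already established in Lemma~1.
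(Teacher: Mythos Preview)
Your proposal is correct and is exactly the natural argument the paper implicitly relies on: the paper states this lemma without proof, positioning it immediately after Lemma~1 so that the upper bound follows directly from the $l_\infty$ estimate (\ref{equ:conv_5}) via monotonicity of $s\mapsto s^q$, and the lower bound is trivial. There is nothing to add.
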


We now consider deriving an upper bound for $ \left\| {\frac{\boldsymbol{m}_t}{|\boldsymbol{m}_t|^q + \xi}}  \right \|_2$. It is straightforward that  
\begin{align}
 \left\| {\frac{\boldsymbol{m}_t}{|\boldsymbol{m}_t|^q + \xi}}  \right \|_2 &\leq  \left\| {\frac{\boldsymbol{m}_t}{|\boldsymbol{m}_t|^q }}  \right \|_2 \nonumber \\
 & \leq  \left\| {\frac{\boldsymbol{m}_t}{|\boldsymbol{m}_t|^q}}  \right \|_1 \nonumber \\
 &  \leq  \left\| |\boldsymbol{m}_t|^{1-q}  \right \|_1 \nonumber \\    
 & \leq \frac{d \alpha^{1-q} \sigma^{1-q} (1-\beta^t)^{1-q}}{(1-\beta)^{1-q} }, \label{equ:conv_8}
\end{align}
where the last inequality follows from (\ref{equ:conv_5}). 

Inspired by the corresponding analysis in  \cite{Adam18Converge}, the lower bound for $\left\langle \boldsymbol{g}_t, \frac{\boldsymbol{m}_t}{|\boldsymbol{m}_t|^q + \xi} \right\rangle$ can be derived as 
\begin{align}
&\left\langle \boldsymbol{g}_t, \frac{\boldsymbol{m}_t}{|\boldsymbol{m}_t|^q + \xi} \right\rangle \nonumber \\
& \stackrel{(a)}{=}  \left\langle \boldsymbol{g}_t,  \alpha \sum_{k=1}^t  \frac{\beta^{t-k}\boldsymbol{g}_k}{|\boldsymbol{m}_t|^q + \xi} \right\rangle   \nonumber \\
& \stackrel{(b)}{\geq}  \alpha \|\boldsymbol{g}_t \|_2^2 / \lambda_{max}(\textrm{diag}(|\boldsymbol{m}_t|^q + \xi)) \nonumber \\
&  - \alpha \|\boldsymbol{g}_t\|_2 \sqrt{d} \sigma  \cdot \left(\sum_{j=1}^{t-1} \beta^j\right) / \lambda_{min}(\textrm{diag}(|\boldsymbol{m}_t|^q + \xi))  \nonumber \\
& =  \alpha \|\boldsymbol{g}_t \|_2^2 / \lambda_{max}(\textrm{diag}(|\boldsymbol{m}_t|^q + \xi)) \nonumber \\
&  \hspace{5mm}- \frac{\alpha(\beta-\beta^t) }{1-\beta} \|\boldsymbol{g}_t\|_2 \sqrt{d} \sigma   / \lambda_{min}(\textrm{diag}(|\boldsymbol{m}_t|^q + \xi))  \nonumber  \\
& \stackrel{(c)}{\geq}  \alpha \|\boldsymbol{g}_t \|_2^2 / \left(  \frac{\alpha^q \sigma^q (1-\beta^t)^q}{ (1-\beta)^q } +\xi  \right) \nonumber \\
& \hspace{5mm} - \frac{\alpha(\beta-\beta^t) }{(1-\beta)\xi} \|\boldsymbol{g}_t\|_2 \sqrt{d} \sigma \nonumber  \\
& =  \alpha \|\boldsymbol{g}_t \|_2^2   \Bigg( \frac{(1-\beta)^q}{\alpha^q \sigma^q (1-\beta^t)^q+\xi  (1-\beta)^q} \nonumber \\
& \hspace{20mm} - \frac{(\beta-\beta^t)  \sqrt{d} \sigma}{(1-\beta)\xi \| \boldsymbol{g}_t\|_2}  \Bigg) \nonumber \\
& = \frac{ \alpha \|\boldsymbol{g}_t \|_2^2}{\left[\alpha^q \sigma^q (1-\beta^t)^q +\xi  (1-\beta)^q \right] (1-\beta)\xi \| \boldsymbol{g}_t\|_2 }  \nonumber  \\
& \hspace{3mm} \cdot  \Big( (1-\beta)^{1+q}\xi \| \boldsymbol{g}_t\|_2 \nonumber \\
& \hspace{5mm}- \hspace{-0.5mm}(\beta  \hspace{-0.5mm}- \hspace{-0.5mm} \beta^t)  \sqrt{d} \sigma  \left[\alpha^q \sigma^q (1  \hspace{-0.5mm}- \hspace{-0.5mm} \beta^t)^q  \hspace{-0.5mm}+ \hspace{-0.5mm} \xi  (1  \hspace{-0.5mm}- \hspace{-0.5mm} \beta)^q \right]  \Big) \nonumber 
\end{align}
\begin{align}
& = \frac{ \alpha \|\boldsymbol{g}_t \|_2^2 (\beta-\beta^t) (1-\beta)^q  \sqrt{d} \sigma }{\left[\alpha^q \sigma^q (1-\beta^t)^q +\xi  (1-\beta)^q \right] (1-\beta)\xi \| \boldsymbol{g}_t\|_2  }  \nonumber  \\
& \hspace{3mm} \cdot  \left( \frac{(1-\beta)\xi \| \boldsymbol{g}_t\|_2}{(\beta-\beta^t) \sqrt{d} \sigma} -  \left[\alpha^q \sigma^q \frac{(1-\beta^t)^q}{(1-\beta)^q}+\xi   \right]  \right) \nonumber \\
& = \frac{ \alpha \|\boldsymbol{g}_t \|_2^2 (\beta-\beta^t) (1-\beta)^q  \sqrt{d} \sigma }{\left[\alpha^q  \sigma^q (1-\beta^t)^q+\xi  (1-\beta)^q \right] (1-\beta)\xi \| \boldsymbol{g}_t\|_2  }  \nonumber \\
& \hspace{3mm} \cdot  \left( \xi \left( \frac{(1-\beta) \| \boldsymbol{g}_t\|_2}{(\beta-\beta^t) \sqrt{d} \sigma} -1\right) -  \alpha^q \sigma^q \frac{(1-\beta^t)^q}{(1-\beta)^q}  \right) \nonumber  \\
& = \frac{ \alpha \|\boldsymbol{g}_t \|_2^2 (\beta-\beta^t) (1-\beta)^q  \sqrt{d} \sigma  \left( \frac{(1-\beta) \| \boldsymbol{g}_t\|_2}{(\beta-\beta^t) \sqrt{d} \sigma} -1\right)}{\left[\alpha^q  \sigma^q (1-\beta^t)^q+\xi  (1-\beta)^q \right] (1-\beta)\xi \| \boldsymbol{g}_t\|_2  }  \nonumber  \\
& \hspace{3mm} \cdot    \left( \xi -  \frac{ \alpha^q \sigma^q (1-\beta^t)^q}{(1-\beta)^q  \left( \frac{(1-\beta) \| \boldsymbol{g}_t\|_2}{(\beta-\beta^t) \sqrt{d} \sigma} -1\right)}  \right),    \label{equ:conv_9}
\end{align}
where $(a)$ follows from (\ref{equ:conv_6}), $(b)$ uses the triangle inequality, $(c)$ follows from (\ref{equ:conv_6})-(\ref{equ:conv_7}).  

Now we are in a position to find the support regions for $\beta$ and $\alpha$ such that the lower bound in (\ref{equ:conv_9}) is positive, which is crucial to ensure that $\eta_t^{*}$ in (\ref{equ:conv_2}) is positive. Similarly to the work \cite{Adam18Converge}, we first consider the support region for $\beta$. It is clear from (\ref{equ:conv_9}) that $\beta$ should be chosen such that 
\begin{align}
 \frac{(1-\beta) \| \boldsymbol{g}_t\|_2}{(\beta-\beta^t) \sqrt{d} \sigma} > \frac{(1-\beta) \epsilon }{\beta \sqrt{d} \sigma} > 1,  \label{equ:conv_10}
\end{align}
where the assumption $ \| \boldsymbol{g}_t\|_2 > \epsilon $ is exploited.  Rearranging the inequality (\ref{equ:conv_10}) produces an upper bound for $\beta$:
\begin{align}
\beta  < \frac{\epsilon}{ \sqrt{d} \sigma + \epsilon}.  \label{equ:conv_11}
\end{align}
To simplify analysis later on, a scalar parameter $\theta_1$ is introduced as follows 
\begin{align}
\theta_1=  \frac{(1-\beta) \epsilon }{\beta \sqrt{d} \sigma} - 1 >0. 
\label{equ:theta_1}
\end{align}
Suppose $\beta$ satisfies the condition (\ref{equ:conv_11}). The parameter $\alpha$ should be selected such that 
\begin{align}
 &\xi -  \frac{ \alpha^q \sigma^q (1-\beta^t)^q}{(1-\beta)^q  \left( \frac{(1-\beta) \| \boldsymbol{g}_t\|_2}{(\beta-\beta^t) \sqrt{d} \sigma} -1\right)} \nonumber \\
 &\geq \xi -  \frac{ \alpha^q \sigma^q   }{ (1-\beta)^q  \left( \frac{(1-\beta) \epsilon }{\beta \sqrt{d} \sigma} -1\right)} > 0. \nonumber
\end{align}
Based on the above inequality, an upper bound for $\alpha$ can be derived as
\begin{align}
\alpha  < \left( \frac{\xi (1-\beta)^q}{\sigma^q}  \left( \frac{(1-\beta) \epsilon }{\beta \sqrt{d} \sigma} -1\right)\right)^{1/q}. \label{equ:conv_12}
\end{align}
Similarly to $\theta_1$, a new parameter $\theta_2$ can be introduced as follows 
\begin{align}
\theta_2 = \xi -  \frac{\alpha^q \sigma^q }{(1-\beta)^q\theta_1}  < \xi - 
 \frac{ \alpha^q \sigma^q (1-\beta^t)^q}{(1-\beta)^q  \left( \frac{(1-\beta) \| \boldsymbol{g}_t\|_2}{(\beta-\beta^t) \sqrt{d} \sigma} -1\right)}. \nonumber 
\end{align}
A similar definition of $\theta_1$ and $\theta_2$ can also be found in \cite{Adam18Converge} for deterministic Adam. Finally, under the two conditions (\ref{equ:conv_11}) and (\ref{equ:conv_12}), the lower bound (\ref{equ:conv_9}) can be simplified as 
\begin{align}
&\left\langle \boldsymbol{g}_t, \frac{\boldsymbol{m}_t}{|\boldsymbol{m}_t|^q + \xi} \right\rangle  \geq \frac{ \alpha \|\boldsymbol{g}_t \|_2 \beta (1-\beta)^q  \sqrt{d} \sigma \theta_1 \theta_2 }{\left[\alpha^q \sigma^q+\xi  (1-\beta)^q \right] \xi }.
\label{equ:conv_13}  
\end{align}

Upon deriving the upper and lower bounds (\ref{equ:conv_8}) and (\ref{equ:conv_13}), the final upper bound for the functional difference in (\ref{equ:conv_3}) can be represented as 
\begin{align}
&f(\boldsymbol{x}_{t+1} - f(\boldsymbol{x}_t)) \nonumber \\
&\leq -\frac{1}{2L} \frac{\left( \frac{ \alpha \|\boldsymbol{g}_t \|_2 \beta (1-\beta)^q  \sqrt{d} \sigma \theta_1 \theta_2 }{\left[\alpha^q \sigma^q+\xi  (1-\beta)^q \right] \xi } \right)^2  }{ \left(\frac{d \alpha^{1-q}  \sigma^{1-q} (1-\beta^t)^{1-q}}{(1-\beta)^{1-q} } \right)^2 } \nonumber \\
& = \hspace{-0.5mm} - \hspace{-0.5mm}\frac{\|\boldsymbol{g}_t \|_2^2}{2L} \frac{\left(  \alpha  \beta (1-\beta)^q  \sqrt{d} \sigma \theta_1 \theta_2  \right)^2  (1-\beta)^{2(1-q)} }{ \left(d (\alpha\sigma)^{1-q} (1 \hspace{-0.7mm}-\hspace{-0.7mm} \beta^t)^{1-q} \right)^2 \hspace{-0.5mm} \left[\alpha^q \sigma^q \hspace{-0.6mm}+\hspace{-0.6mm} \xi  (1 \hspace{-0.7mm} - \hspace{-0.7mm} \beta)^q \right]^2 \hspace{-0.5mm} \xi^2} \nonumber \\
& <  -\frac{\|\boldsymbol{g}_t \|_2^2}{2L} \frac{\left(  \alpha^q\sigma^q   \beta (1-\beta) \theta_1 \theta_2  \right)^2 }{ d  \left[\alpha^q \sigma^q+\xi  (1-\beta)^q \right]^2 \xi^2}.  \hspace{-2mm} \label{equ:conv_14}  
\end{align}

Summing (\ref{equ:conv_14}) from $t= 2$ until $t=T+1$ produces 
\begin{align}
&f(\boldsymbol{x}_{2}) - f(\boldsymbol{x}_{T+2})  \nonumber \\
&= \sum_{t=2}^{T+1} f(\boldsymbol{x}_{t}) - f(\boldsymbol{x}_{t+1}) \nonumber \\
&\geq  \sum_{t=2}^{T+1} \left(\frac{\left(  \alpha^q\sigma^q   \beta (1-\beta) \theta_1 \theta_2  \right)^2 }{ 2Ld \left[\alpha^q \sigma^q+\xi  (1-\beta)^q \right]^2 \xi^2} \right) \| \nabla f(\boldsymbol{x}_t) \|_2^2. \nonumber
\end{align}
As a result, we have 
\begin{align}
\min_{t=2,\ldots, T+1} \|f(\boldsymbol{x}_t)  \|^2 \leq & \frac{ 2Ld \left[\alpha^q \sigma^q+\xi  (1-\beta)^q \right]^2 \xi^2}{T \left(  \alpha^q\sigma^q   \beta (1-\beta) \theta_1 \theta_2  \right)^2} \nonumber \\
&  \cdot \left[ f(\boldsymbol{x}_{2}) - f(\boldsymbol{x}^{*})\right],   \label{equ:conv_15}  
\end{align}
where $\boldsymbol{x}^{*}$ represents the optimal solution. 
If $T$ is chosen to be  
\begin{align}
T > \frac{ 2Ld \left[ \alpha^q \sigma^q+\xi  (1-\beta)^q \right]^2 \xi^2}{ \epsilon^2 \left(  \alpha^q\sigma^q   \beta (1-\beta) \theta_1 \theta_2  \right)^2}  \left[ f(\boldsymbol{x}_{2}) - f(\boldsymbol{x}^{*})\right], \nonumber
\end{align}
the RHS of (\ref{equ:conv_15}) is upper bounded by $ \epsilon^2$, which violates the assumption of $\{\|\boldsymbol{g}_t\|>\epsilon |t\geq 1  \} $. The proof is complete.

\ifCLASSOPTIONcaptionsoff
  \newpage
\fi

\bibliographystyle{IEEEtran}

\end{document}